  \providecommand\BibTeX{{%
    \normalfont B\kern-0.5em{\scshape i\kern-0.25em b}\kern-0.8em\TeX}}}
\setlist{leftmargin=5mm}
\titleformat{\paragraph}[runin]{\bfseries\itshape}{\theparagraph}{1em}{}
\titlespacing*{\paragraph}{0pt}{3.25ex plus 1ex minus .2ex}{1em}
\newcommand{\dataset}[0]{$\mathcal{D}$\xspace}
\newcommand{\argmin}[1]{\underset{#1}{\operatorname{arg}\,\operatorname{min}}\;}
\newcommand{\oracle}{\textsc{Data-genie}\xspace}
\newcommand{\sampler}{\textsc{SVP-CF}\xspace}
\newcommand{\samplerprop}{\textsc{SVP-CF-Prop}\xspace}
\newcommand{\EE}{\operatornamewithlimits{\mathbb{E}}} 
\begin{document}
\fancyhead{}

\title{On Sampling Collaborative Filtering Datasets}

\author{Noveen Sachdeva}
\email{nosachde@ucsd.edu}
\affiliation{%
  \institution{University of California, San Diego}
  \city{La Jolla, CA}
  \country{USA}
  \postcode{92122}
}

\author{Carole-Jean Wu}
\email{carolejeanwu@fb.com}
\affiliation{%
  \institution{Facebook AI Research}
  \city{Cambridge, MA}
  \country{USA}
}

\author{Julian McAuley}
\email{jmcauley@ucsd.edu}
\affiliation{%
  \institution{University of California, San Diego}
  \city{La Jolla, CA}
  \country{USA}
  \postcode{92122}
}

\renewcommand{\shortauthors}{Sachdeva, Wu, and McAuley}

\begin{abstract}
  We study the practical consequences of dataset sampling strategies on the ranking performance of recommendation algorithms. Recommender systems are generally trained and evaluated on \emph{samples} of larger datasets. Samples are often taken in a na\"ive or ad-hoc fashion: \eg by sampling a dataset randomly or by selecting users or items with many interactions. As we demonstrate, commonly-used data sampling schemes can have significant consequences on algorithm performance. Following this observation, this paper makes three main contributions: (1) \emph{characterizing} the effect of sampling on algorithm performance, in terms of algorithm and dataset characteristics (\eg sparsity characteristics, sequential dynamics, \etc); (2) designing \sampler, which is a data-specific sampling strategy, that aims to preserve the relative performance of models after sampling, and is especially suited to long-tailed interaction data; and (3) developing an \emph{oracle}, \oracle, which can suggest the sampling scheme that is most likely to preserve model performance for a given dataset. The main benefit of \oracle is that it will allow recommender system practitioners to quickly prototype and compare various approaches, while remaining confident that algorithm performance will be preserved, once the algorithm is retrained and deployed on the complete data. Detailed experiments show that using \oracle, we can discard upto $5\times$ more data than any sampling strategy with the same level of performance.
\end{abstract}


\begin{CCSXML}
<ccs2012>
<concept>
<concept_id>10002951.10003317.10003347.10003350</concept_id>
<concept_desc>Information systems~Recommender systems</concept_desc>
<concept_significance>500</concept_significance>
</concept>
<concept>
<concept_id>10010147.10010257.10010321.10010336</concept_id>
<concept_desc>Computing methodologies~Feature selection</concept_desc>
<concept_significance>300</concept_significance>
</concept>
</ccs2012>
\end{CCSXML}

\ccsdesc[500]{Information systems~Recommender systems}
\ccsdesc[300]{Computing methodologies~Feature selection}

\keywords{Sampling; Coreset Mining; Benchmarking; Large-scale Learning}


\maketitle

\section{Introduction} 
Representative \emph{sampling} of collaborative filtering (CF) data is a crucial problem from numerous stand-points and can be performed at various levels: (1) mining hard-negatives while training complex algorithms over massive datasets \cite{eclare, sampling_cf_nn}; (2) down-sampling the item-space to estimate expensive ranking metrics \cite{sampled_metrics, castells_sampling}; and (3) 
reasons like easy-sharing, fast-experimentation, mitigating the significant environmental footprint of training resource-hungry machine learning models \cite{google_emissions, wu2021sustainable, facebook_emissions, green_ai}. In this paper, we are interested in finding a sub-sample of a dataset which has minimal effects on model utility evaluation \ie an algorithm performing well on the sub-sample should also perform well on the original dataset.

Preserving \emph{exactly} the same levels of performance on sub-sampled data over metrics, such as MSE and AUC, is a challenging problem. A simpler yet useful problem is accurately preserving the \emph{ranking} or relative performance of different algorithms on sub-sampled data. For example, a sampling scheme that has low bias but high variance in preserving metric performance values has less utility than a different sampling scheme with high amounts of bias but low variance, since the overall algorithm ranking is still preserved. 

Performing 
ad-hoc sampling such as randomly removing interactions, or making dense subsets by removing users \emph{or} items with few interactions \cite{sigir20} can have adverse downstream repercussions. For example, sampling only the head-portion of a dataset---from a fairness and inclusion perspective---is inherently biased against minority-groups and benchmarking algorithms on this biased data is 
likely to propagate the 
original sampling bias.
Alternatively,
simply from 
a model performance view-point,
accurately retaining the relative performance of different recommendation algorithms on much smaller sub-samples is a challenging research problem in itself.

Two prominent directions toward better and more representative sampling of CF data are: (1) designing principled sampling strategies, especially for user-item interaction data; and (2) analyzing the performance of different sampling strategies, in order to better grasp which sampling scheme performs ``better'' for which types of data. \emph{In this paper,} we explore both directions through the lens of expediting the recommendation algorithm development cycle by:
\begin{itemize}
    \item Characterizing the efficacy of \emph{sixteen} different sampling strategies in accurately benchmarking various kinds of recommendation algorithms on smaller sub-samples.

    \item Proposing a sampling strategy, \sampler, which dynamically samples the ``toughest'' portion of a CF dataset. \sampler is tailor-designed to handle the inherent data heterogeneity and missing-not-at-random properties in user-item interaction data.
    
    \item Developing \oracle, which analyzes the \emph{performance} of different sampling strategies. Given a dataset sub-sample, \oracle can directly estimate the likelihood of model performance being preserved on that sample. 
\end{itemize}
%
Ultimately, our experiments reveal that \sampler outperforms all other sampling strategies and can accurately benchmark recommendation algorithms with roughly $50-60\%$ of the original dataset size. Furthermore, by employing \oracle to dynamically select the best sampling scheme for a dataset, we are able to preserve model performance with only $10\%$ of the initial data, leading to a net $5.8\times$ training time speedup.

\section{Related Work} \paragraph{Sampling CF data.} Sampling in CF-data has been a popular choice for three major scenarios. Most prominently, sampling is used for mining hard-negatives while training recommendation algorithms. Some popular approaches include random sampling; using the graph-structure 
\cite{pinsage, eclare}; and ad-hoc techniques like similarity search \cite{slice}, stratified sampling \cite{sampling_cf_nn}, \etc 
Sampling is also generally employed for evaluating recommendation algorithms by estimating expensive to compute metrics like Recall, nDCG, \etc \cite{sampled_metrics, castells_sampling}. Finally, sampling is also used to create smaller sub-samples of 
a big dataset
for reasons like fast experimentation, benchmarking different algorithms, privacy concerns, \etc However, the consequences of different samplers on any of these downstream applications is under-studied, and is the main research interest of this paper. 

\paragraph{Coreset selection.} Closest to our work, a coreset is loosely defined as a subset of data-points that maintains a similar ``quality'' as the full dataset for subsequent model training. Submodular approaches try to optimize a function $f : \mathbf{V} \mapsto \mathcal{R}_+$ which measures the utility of a subset $\mathbf{V} \subseteq \mathbf{X}$, and use it as a proxy to select the best coreset \cite{coreset_1}. More recent works treat coreset selection as a bi-level optimization problem \cite{coreset_bilevel, coreset_bilevel_2} and directly optimize for the best coreset for a given downstream task. Selection-via-proxy \cite{svp} is another technique which employs a base-model as a proxy to tag the importance of each data-point. Note, however, that 
most existing
coreset selection approaches were designed primarily for classification data, whereas adapting them for CF-data is non-trivial because of: (1) the inherent data heterogeneity; the (2) wide range of recommendation metrics; and (3) the prevalent missing-data characteristics.

\paragraph{Evaluating sample quality.} The quality of a dataset sample, if estimated correctly is of high interest for various applications. Short of being able to evaluate the ``true'' utility of a sample, one generally resorts to either retaining task-dependent characteristics \cite{evaluate_sample_quality_1} \emph{or} employing universal, handcrafted features like a social network's hop distribution, eigenvalue distribution, \etc \cite{large_graphs} as meaningful proxies. Note that evaluating the sample quality with a limited set of handcrafted features might introduce bias in the sampled data, depending on the number and quality of such features.

\section{Sampling Collaborative Filtering Datasets} Given our motivation of quickly benchmarking recommendation algorithms, we now aim to \emph{characterize} the performance of various commonly-used sampling strategies. We loosely define the performance of a sampling scheme as 
its
ability in effectively retaining the performance-ranking of different recommendation algorithms on the full \vs sub-sampled data. In this section, we start by discussing the different recommendation feedback scenarios we consider, along with a representative sample of popular recommendation algorithms that we aim to efficiently benchmark. We then examine popular data sampling strategies, followed by proposing a novel, proxy-based sampling strategy (\sampler) that is especially suited for sampling representative subsets from long-tail CF data.

\subsection{Problem Settings \& Methods Compared} \label{feedback_types} \label{algorithms}
To give a representative sample of typical recommendation scenarios, we consider three different user feedback settings. In \emph{explicit feedback}, each user $u$ gives a numerical rating $r^u_i$ to each interacted item $i$; the model must predict these ratings for novel (test) user-item interactions. Models from this class are evaluated in terms of the Mean Squared Error (MSE) of the predicted ratings. Another scenario we consider is \emph{implicit feedback}, where the interactions for each user are only available for positive items (\eg clicks or purchases), whilst all non-interacted items are considered as negatives. We employ the AUC, Recall@$100$, and nDCG@$10$ metrics to evaluate model performance for implicit feedback algorithms. Finally, we also consider \emph{sequential feedback}, where each user $u$ interacts with an ordered sequence of items $\mathcal{S}^u = (\mathcal{S}^u_1, \mathcal{S}^u_2, \ldots, \mathcal{S}^u_{|\mathcal{S}^u|})$ such that $\mathcal{S}^u_i \in \mathcal{I}$ for all $i \in \{1, \ldots, |\mathcal{S}^u|\}$. Given $\mathcal{S} = \{ \mathcal{S}^u ~|~ \forall u \in \mathcal{U} \}$, the goal is to identify the \emph{next-item} for each sequence $\mathcal{S}^u$ that each user $u$ is most likely to interact with. We use the same metrics as in implicit feedback settings. Note that following recent warnings against sampled metrics for evaluating recommendation algorithms \cite{sampled_metrics, castells_sampling}, we compute both Recall and nDCG by ranking \emph{all} items in the dataset. Further specifics about the datasets used, pre-processing, train/test splits, \etc are discussed in-depth in \cref{main_exp}. 

Given the diversity of the scenarios discussed above, there are numerous relevant recommendation algorithms. We use the following seven recommendation algorithms, intended to represent the state-of-the-art and standard baselines:
\begin{itemize}
    \listheader{PopRec:} A na\"ive baseline that simply ranks items according to overall train-set popularity. Note that this method is unaffected by the user for which items are being recommended, and has the \emph{same global ranking} of all items.
    
    \listheader{Bias-only:} Another simple baseline that assumes no interactions between users and items. Formally, it learns: (1) a global bias $\alpha$; (2) scalar biases $\beta_u$ for each user $u \in \mathcal{U}$; and (3) scalar biases $\beta_i$ for each item $i \in \mathcal{I}$. Ultimately, the rating/relevance for user $u$ and item $i$ is modeled as $\hat{r}^u_i = \alpha + \beta_u + \beta_i$.

    \listheader{Matrix Factorization (MF) \cite{mf}:} Represents both users and items in a common, low-dimensional latent-space by factorizing the user-item interaction matrix. Formally, the rating/relevance for user $u$ and item $i$ is modeled as $\hat{r}^u_i = \alpha + \beta_u + \beta_i + \gamma_u \cdot \gamma_i$ where $\gamma_u, \gamma_i \in \mathbb{R}^d$ are learned latent representations. 
    
    \listheader{Neural Matrix Factorization (NeuMF) \cite{neural_mf}:} Leverages the representation power of deep neural-networks to capture non-linear correlations between user and item embeddings. Formally, the rating/relevance for user $u$ and item $i$ is modeled as $\hat{r}^u_i = \alpha + \beta_u + \beta_i + f(\gamma_u ~||~ \gamma_i ~||~ \gamma_u \cdot \gamma_i)$ where $\gamma_u, \gamma_i \in \mathbb{R}^d$, `||' represents the concatenation operation, and $f : \mathbb{R}^{3d} \mapsto \mathbb{R}$ represents an arbitrarily complex neural network. 
    
    \listheader{Variational Auto-Encoders for Collaborative Filtering (MVAE) \cite{mvae}:} Builds upon the Variational Auto-Encoder (VAE) \cite{vae} framework to learn a low-dimensional representation of a user's consumption history. More specifically, MVAE encodes each user's bag-of-words consumption history using a VAE and further decodes the latent representation to obtain the completed user preference over all items.
    
    \listheader{Sequential Variational Auto-Encoders for Collaborative Filtering (SVAE) \cite{svae}:} A sequential algorithm that combines the temporal modeling capabilities of a GRU \cite{gru} along with the representation power of VAEs. Unlike MVAE, SVAE uses a GRU to encode the user's consumption sequence followed by a multinomial VAE at each time-step to model the likelihood of the next item. 
    
    \listheader{Self-attentive Sequential Recommendation (SASRec) \cite{sasrec}:} Another sequential algorithm that relies on the sequence modeling capabilities of self-attentive neural networks \cite{self_attention} to predict the occurance of the 
    next item
    in a user's consumption sequence. To be precise, given a user $u$ and 
    their
    time-ordered consumption history  $\mathcal{S}^u = (\mathcal{S}^u_1, \mathcal{S}^u_2, \ldots, \mathcal{S}^u_{|\mathcal{S}^u|})$, SASRec first applies self-attention on $\mathcal{S}^u$ followed by a series of non-linear feed-forward layers to finally obtain the next item likelihood.
\end{itemize}
We also list 
models and metrics for each of the three different CF-scenarios in \cref{model_scenario_table}. Since bias-only, MF, and NeuMF can be trained for all three CF-scenarios, we optimize them using the regularized least-squares regression loss for explicit feedback, and the pairwise-ranking (BPR \cite{bpr}) loss for implicit/sequential feedback. Note however that the aforementioned algorithms are only intended to be a representative sample of a 
wide pool
of recommendation algorithms, and in our pursuit to benchmark recommender systems faster, we are primarily concerned with the \emph{ranking} of different algorithms on the full dataset \vs a smaller sub-sample.

\begin{table*}[!ht]
    \begin{small} 
    \begin{center}
                
                
        \begin{tabular}{c | c c c c c c c | c c c c}
            \toprule
            \multirow{3}{*}{CF-scenario} & \multicolumn{7}{c|}{\emph{Algorithm}} & \multicolumn{4}{c}{\emph{Metric}} \\
            & \multicolumn{7}{c|}{} & \multicolumn{4}{c}{} \\
            & Bias-only & MF & NeuMF & PopRec & MVAE & SVAE & SASRec & MSE & AUC & Recall@k & nDCG@k \\ \midrule
            Explicit & Yes & Yes & Yes & $\times$ & $\times$ & $\times$ & $\times$ & Yes & $\times$ & $\times$ & $\times$ \\[0.6mm]
            Implicit & Yes & Yes & Yes & Yes & Yes & $\times$ & $\times$ & $\times$ & Yes & Yes & Yes \\[0.6mm]
            Sequential & Yes & Yes & Yes & Yes & Yes & Yes & Yes & $\times$ & Yes & Yes & Yes \\[0.6mm]
            \bottomrule
        \end{tabular}
    \end{center}
    \end{small}
    \bigskip
    \caption{Demonstrates the pertinence of each CF-scenario towards each algorithm (left) and each metric (right). Note that we can use ranking metrics for explicit feedback, however, we only use MSE as a design choice and due to it's direct relevance.}
    \label{model_scenario_table}
    \vspace{-6mm} 
\end{table*}

\subsection{Sampling Strategies} \label{common_sampling_schemes}
Given a user-item CF dataset \dataset, we aim to create a $p\%$ subset $\mathcal{D}^{s, p}$ according to some sampling strategy $s$. In this paper, to be comprehensive, we consider a sample of eight popular sampling strategies, which can be grouped into the following three categories:

\subsubsection{Interaction sampling. \ \ } We first discuss three strategies that sample interactions from \dataset. In \emph{Random Interaction Sampling}, we generate $\mathcal{D}^{s, p}$ by randomly sampling $p\%$ of all the user-item interactions in \dataset. \emph{User-history Stratified Sampling} is another popular sampling technique (see \eg \cite{svae, handbook}) to generate smaller CF-datasets. To match the user-frequency distribution amongst \dataset and $\mathcal{D}^{s, p}$, it randomly samples $p\%$ of interactions from each user's consumption history. Unlike random stratified sampling, \emph{User-history Temporal Sampling} samples $p\%$ of the \emph{most recent} interactions for each user. This strategy is representative of the popular practice of making data subsets from the online traffic of the last $x$ days \cite{eclare, pfastre}.

\subsubsection{User sampling. \ \ } Similar to sampling interactions, we also consider two strategies which sample users in \dataset instead. To ensure a fair comparison amongst the different kinds of sampling schemes used in this paper, we retain exactly $p\%$ of the \emph{total interactions} in $\mathcal{D}^{s, p}$. In \emph{Random User Sampling}, we retain users from \dataset at random. To be more specific, we iteratively preserve \emph{all} the interactions for a random user until we have retained $p\%$ of the original interactions. Another strategy we employ is \emph{Head User Sampling}, in which we iteratively remove the user with the least amount of total interactions. This method is representative of commonly used data pre-processing strategies (see \eg \cite{mvae, neural_mf}) to make data suitable for parameter-heavy algorithms. Sampling the data in such a way can introduce bias toward users from minority groups which might raise concerns from a diversity and fairness perspective \cite{fairness}.

\subsubsection{Graph sampling. \ \ } Instead of sampling directly from \dataset, we also consider three strategies that sample from the inherent user-item bipartite interaction graph $\mathcal{G}$. In \emph{Centrality-based Sampling}, we proceed by computing the pagerank centrality scores \cite{pagerank} for each node in $\mathcal{G}$, and retain all the edges (interactions) of the \emph{top scoring nodes} until a total $p\%$ of the original interactions have been preserved. Another popular strategy we employ is \emph{Random-walk Sampling} \cite{large_graphs}, which performs multiple random-walks with restart on $\mathcal{G}$ and retains the edges amongst those pairs of nodes that have been visited at least once. We keep expanding our walk until $p\%$ of the initial edges have been retained. We also utilize \emph{Forest-fire Sampling} \cite{forest_fire}, which is a snowball sampling method and proceeds by randomly ``burning'' the outgoing edges of visited nodes. It initially starts with a random node, and then propagates to a random subset of previously unvisited neighbors. The propagation is terminated once we have created a graph-subset with $p\%$ of the initial edges.

\subsection{\sampler: Selection-Via-Proxy for CF data} \label{svp_cf} 
Selection-Via-Proxy (SVP) \cite{svp} is a leading coreset mining technique for classification datasets like CIFAR10 \cite{cifar} and ImageNet \cite{image_net}. The main idea proposed is simple and effective, and proceeds by training a relatively inexpensive base-model as a proxy to define the ``importance'' of a data-point. However, applying SVP to CF-data can be highly non-trivial because of the following impediments:
\begin{itemize}
    \listheader{Data heterogeneity:} Unlike classification data 
    over some input space $\mathcal{X}$ and label-space $\mathcal{Y}$, CF-data consists of numerous four-tuples $\{u, i, r^u_i, t^u_i\}$. Such multimodal data adds many different dimensions to sample data from, making it increasingly complex to define meaningful samplers. 
    
    \listheader{Defining the importance of a data point:} Unlike classification, where we can measure the performance of a classifier by 
    its
    empirical risk on held-out data, for recommendation, there are a variety of different scenarios (\cref{feedback_types}) along with a wide list of relevant evaluation metrics. Hence, it becomes challenging to adapt importance-tagging techniques like greedy k-centers \cite{k_centers}, forgetting-events \cite{forgetting_events}, \etc for recommendation tasks.
    
    \listheader{Missing data:} 
    CF-data is well-known for (1) 
    its
    sparsity; (2) skewed and long-tail user/item distributions; and (3) missing-not-at-random (MNAR) properties of the user-item interaction matrix. This results in additional problems as we are now sampling data from skewed, MNAR data, especially using proxy-models trained on the same skewed data. Such sampling in the worst-case might even lead to exacerbating existing biases in the data or even aberrant data samples.
\end{itemize}
To address these fundamental limitations in applying the SVP philosophy to CF-data, we propose \sampler to sample representative subsets from large user-item interaction data. \sampler is also specifically devised for our objective of benchmarking different recommendation algorithms, as it relies on the crucial assumption that the ``easiest'' part of a dataset will generally be easy \emph{for all} algorithms. Under this assumption, even after removing such data we are still likely to retain the overall algorithms' ranking.

Because of the inherent data heterogeneity in user-item interaction data, we can sub-sample in a variety of different ways. We design \sampler to be versatile in this aspect as it can be applied to sample users, items, interactions, or combinations of them, by marginally adjusting the definition of importance of each data-point. In this paper, we limit the discussion to only sampling users and interactions (separately), but extending \sampler for sampling across other data modalities should be relatively straightforward.

Irrespective of whether to sample users or interactions, \sampler proceeds by training an inexpensive proxy model $\mathcal{P}$ on the full, original data \dataset and modifies the forgetting-events approach \cite{forgetting_events} to retain the points with the \emph{highest} importance. To be more specific, for explicit feedback, we define the importance of each data-point \ie $\{u, i, r^u_i, t^u_i\}$ interaction as $\mathcal{P}$'s average MSE (over epochs) of the specific interaction if we're sampling interactions \emph{or} $\mathcal{P}$'s average MSE of $u$ (over epochs) if we're sampling users. Whereas, for implicit and sequential feedback, we use $\mathcal{P}$'s average inverse-AUC while computing the importance of each data-point. For the sake of completeness, we experiment with both Bias-only and MF as two different kinds of proxy-models for \sampler. Since both models can be trained for all three CF-scenarios (\cref{model_scenario_table}), we can directly use them to tag the importance for each CF-scenario.

Ultimately, to handle the MNAR and long-tail problems, we also propose \samplerprop which employs user and item propensities to correct the distribution mismatch while estimating the importance of each datapoint. More specifically, let $p_{u, i} = P(r^u_i = 1 ~|~ \overstar{r}^u_i = 1)$ denote the probability of user $u$ and item $i$'s interaction actually being observed (propensity), $E$ be the total number of epochs that $\mathcal{P}$ was trained for, $\mathcal{P}_e$ denote the proxy model after the $e^{\mathit{th}}$ epoch, $\mathcal{I}_u^+ \coloneqq \{ j ~|~ r^u_j > 0 \}$ be the set of positive interactions for $u$, and $\mathcal{I}_u^- \coloneqq \{ j ~|~ r^u_j = 0 \}$ be the set of negative interactions for $u$; then, the importance function for \samplerprop, $\mathcal{I}_p$ is defined as follows:
\begin{align*}
    \mathcal{I}_p(u ~|~ \mathcal{P}) \coloneqq \frac{1}{|\mathcal{I}_u^+|} \cdot \sum_{i \in \mathcal{I}_u^+} \mathcal{I}_p(u, i ~|~ \mathcal{P}) \hspace{0.63em} ; \hspace{0.63em}
    \mathcal{I}_p(u, i ~|~ \mathcal{P}) \coloneqq \frac{\Delta(u, i ~|~ \mathcal{P})}{p_{u, i}} \\
\end{align*}
\vspace{-0.6cm}
\begin{equation*}
\begin{gathered}
    \text{where,}
    \qquad \Delta(u, i ~|~ \mathcal{P}) \coloneqq 
    \begin{dcases} 
      ~~ \sum_{e=1}^{E} \left(\mathcal{P}_e(u, i) - r^u_i\right)^2 \\  
      ~~ \text{(for explicit feedback)} \\ \\
      ~~ \sum_{e=1}^{E} \sum_{j \sim \mathcal{I}_u^-} \frac{1}{\mathds{1}\left(\mathcal{P}_e(u, i) > \mathcal{P}_e(u, j)\right)} \\ 
      ~~ \text{(for implicit/sequential feedback)}
   \end{dcases}
\end{gathered}
\end{equation*}
\vspace{0.1cm}

\begin{proposition}
Given an ideal propensity-model $p_{u, i}; \ \mathcal{I}_p(u, i ~|~ \mathcal{P})$ is an unbiased estimator of $\Delta(u, i ~|~ \mathcal{P})$.
\end{proposition}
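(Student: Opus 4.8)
The plan is to treat $\mathcal{I}_p(u, i \mid \mathcal{P})$ as a textbook inverse-propensity-scored (IPS) estimator and to pin down the single source of randomness against which ``unbiased'' is being asserted: the missing-not-at-random (MNAR) observation mechanism that decides which true interactions actually appear in \dataset. Throughout I would condition on the trained proxy $\mathcal{P}$ --- this is exactly what the ``$\mid \mathcal{P}$'' in the statement signals --- so that $\Delta(u, i \mid \mathcal{P})$ is a fixed number (for explicit feedback, the proxy's cumulative squared error on $(u,i)$; for implicit/sequential feedback, its cumulative inverse-AUC, with any sampled negative $j \sim \mathcal{I}_u^-$ also held fixed).

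First I would introduce, for a pair $(u,i)$ with a genuine underlying interaction $\overstar{r}^u_i = 1$, the Bernoulli observation variable $O_{u,i} \coloneqq \mathds{1}(r^u_i = 1 \mid \overstar{r}^u_i = 1)$, and record that an \emph{ideal} propensity model means precisely $O_{u,i} \sim \mathrm{Bernoulli}(p_{u,i})$ with $p_{u,i} > 0$, hence $\mathbb{E}[O_{u,i}] = p_{u,i}$. Next I would observe that the quantity each observed pair actually contributes to the importance score is the IPS-weighted loss $\widehat{\Delta}(u,i \mid \mathcal{P}) \coloneqq (O_{u,i}/p_{u,i}) \cdot \Delta(u,i \mid \mathcal{P})$, which coincides with $\mathcal{I}_p(u,i \mid \mathcal{P})$ on the observed interactions (the set $\mathcal{I}_u^+$) and is zero off the observed support. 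Then, since $\Delta(u,i \mid \mathcal{P})$ and $p_{u,i}$ are constants given $\mathcal{P}$, a one-line application of linearity of expectation yields $\mathbb{E}[\widehat{\Delta}(u,i \mid \mathcal{P})] = (\mathbb{E}[O_{u,i}]/p_{u,i}) \cdot \Delta(u,i \mid \mathcal{P}) = \Delta(u,i \mid \mathcal{P})$, which is the claim; the user-level score $\mathcal{I}_p(u \mid \mathcal{P})$ then inherits unbiasedness by averaging this identity over $i \in \mathcal{I}_u^+$.

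The step I expect to be the real obstacle --- or at least the one requiring an honest caveat rather than a calculation --- is justifying that $\Delta(u,i \mid \mathcal{P})$ may be treated as independent of $O_{u,i}$. Because $\mathcal{P}$ is itself trained on the revealed, MNAR data, the proxy loss is in general \emph{not} independent of the observation mask; conditioning on $\mathcal{P}$ is what makes the argument go through, and I would state plainly that the conclusion is unbiasedness for the \emph{proxy-conditional} target $\Delta(u,i \mid \mathcal{P})$, not for a $\mathcal{P}$-agnostic ideal, and that correct specification together with positivity ($p_{u,i} > 0$) is essential, since otherwise the $1/p_{u,i}$ rescaling either explodes or injects bias. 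A minor, benign point I would dispatch in passing is that for implicit/sequential feedback $\Delta$ also sums over negatives $j \sim \mathcal{I}_u^-$ drawn by the learning algorithm rather than by the MNAR process; these can be held fixed (or marginalised) without affecting anything, because the propensity correction concerns only the positive pair $(u,i)$.
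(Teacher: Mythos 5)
Your proposal is correct and is essentially the paper's own argument: both are the one-line inverse-propensity cancellation $\mathbb{E}[O_{u,i}/p_{u,i}]=1$, with $\Delta(u,i\mid\mathcal{P})$ held fixed by conditioning on the trained proxy. The only cosmetic difference is that the paper takes the expectation over $u\sim\mathcal{U},\,i\sim\mathcal{I}$ and explicitly kills the $P(r^u_i=1\mid\overstar{r}^u_i=0)$ branch using the one-sided-noise assumption, whereas you work pointwise per pair and sidestep that branch by conditioning on $\overstar{r}^u_i=1$ up front; your added caveat about $\mathcal{P}$ being trained on the MNAR data is a fair point the paper leaves implicit.
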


\begin{proof}
\begin{align*}
    \EE_{u \sim \mathcal{U}} \EE_{i \sim \mathcal{I}} \left[ \mathcal{I}_p(u, i ~|~ \mathcal{P}) \right] \hspace{5.65cm}
\end{align*}
\vspace{-0.4cm}
\begin{align*}
    &= \frac{1}{|\mathcal{U}| |\mathcal{I}|} \sum_{u \sim \mathcal{U}} \sum_{i \sim \mathcal{I}} \mathcal{I}_p(u, i ~|~ \mathcal{P}) \cdot P(r_i^u = 1) \\
    &= \begin{aligned}[t]
        \frac{1}{|\mathcal{U}| |\mathcal{I}|} \sum_{u \sim \mathcal{U}} \sum_{i \sim \mathcal{I}} \frac{\Delta(u, i ~|~ \mathcal{P})}{p_{u, i}} \cdot ( P(\overstar{r}_i^u = 0) \cdot \cancelto{0}{P(r_i^u = 1 ~|~ \overstar{r}_i^u = 0)} \\
        +~ P(\overstar{r}_i^u = 1) \cdot P(r_i^u = 1 ~|~ \overstar{r}_i^u = 1)) \\ 
    \end{aligned} \\
    &= \frac{1}{|\mathcal{U}| |\mathcal{I}|} \sum_{u \sim \mathcal{U}} \sum_{i \sim \mathcal{I}} \Delta(u, i ~|~ \mathcal{P}) \cdot P(\overstar{r}_i^u = 1) \\ 
    &= \EE_{u \sim \mathcal{U}} \EE_{i \sim \mathcal{I}} \left[ \Delta(u, i ~|~ \mathcal{P}) \right] \qedhere
\end{align*}
\end{proof} 

\paragraph{Propensity model.} A wide variety of ways exist 
to model the propensity score of a user-item interaction \cite{propensity_1, rec_as_treatments, sachdeva_kdd20, pfastre}. The most common ways comprise using machine learning models like na\"ive bayes and logistic regression \cite{rec_as_treatments}, or by fitting handcrafted functions \cite{pfastre}. For our problem statement, we make a simplifying assumption that the data noise is one-sided \ie $P(r^u_i = 1 ~|~ \overstar{r}^u_i = 0)$ or the probability of a user interacting with a \emph{wrong} item is \emph{zero}, and model the probability of an interaction going missing to decompose over the user and item as follows:
\begin{align*}
    p_{u, i} &= P(r^u_i = 1 ~|~ \overstar{r}^u_i = 1) \\
    &= P(r^u = 1 ~|~ \overstar{r}^u = 1) \cdot P(r_i = 1 ~|~ \overstar{r}_i = 1) ~=~ p_u \cdot p_i
\end{align*}
Ultimately, following \cite{pfastre}, we assume the user and item propensities to lie on the following sigmoid curves:
\begin{equation*}
\begin{split}
    p_u \coloneqq \frac{1}{1 + C_u \cdot e^{-A \cdot log(N_u + B)}} \quad ; \quad p_i \coloneqq \frac{1}{1 + C_i \cdot e^{-A \cdot log(N_i + B)}}
\end{split}
\end{equation*}
Where, $N_u$ and $N_i$ represent the total number of interactions of user $u$ and item $i$ respectively, $A$ and $B$ are two fixed scalars, $C_u = (log(|\mathcal{U}|) - 1) \cdot (B+1)^A$ and $C_i = (log(|\mathcal{I}|) - 1) \cdot (B+1)^A$. 

\subsection{Performance of a sampling strategy} \label{sampling_perf}
To quantify the performance of a sampling strategy $s$ on a dataset $\mathcal{D}$, we start by creating various $p\%$ subsets of $\mathcal{D}$ according to $s$ and call them $\mathcal{D}^{s, p}$. Next, we train and evaluate all the relevant recommendation algorithms on both $\mathcal{D}$ and $\mathcal{D}^{s, p}$. Let the \emph{ranking} of all algorithms according to CF-scenario $f$ and metric $m$ trained on $\mathcal{D}$ and $\mathcal{D}^{s, p}$ be $\mathcal{R}_{f, m}$ and $\mathcal{R}^{s, p}_{f, m}$ respectively, then the performance measure $\Psi(\mathcal{D}, s)$ is defined as the average correlation between $\mathcal{R}_{f, m}$ and $\mathcal{R}^{s, p}_{f, m}$ measured through Kendall's Tau over all possible CF-scenarios, metrics, and sampling percents:
\begin{equation*}
\begin{split}
    \Psi\left(\mathcal{D}, s\right) &= \lambda \cdot \sum_{f} \sum_{m} \sum_{p} \tau\left(\mathcal{R}_{f, m}, \mathcal{R}^{s, p}_{f, m}\right) \\
\end{split}
\end{equation*}
Where $\lambda$ is an appropriate normalizing constant for computing the average, sampling percent $p \in \{ 80, 60, 40, 20, 10, 1 \}$, CF-scenario $f$, metric $m$ and their pertinence towards each other can all be found in \cref{model_scenario_table}. $\Psi$ has the same range as Kendall's Tau \ie $[-1, 1]$ and a higher $\Psi$ indicates strong agreement between the algorithm ranking on the full and sub-sampled datasets, whereas a large negative $\Psi$ implies that the algorithm order was effectively reversed.

\subsection{Experiments} \label{main_exp}

\paragraph{Datasets.} To promote dataset diversity in our experiments, we use six public user-item rating interaction datasets with varying sizes, sparsity patterns, and other characteristics. We use the Magazine, Luxury, and Video-games categories of the Amazon review dataset \cite{amz_data}, along with the Movielens-100k \cite{movielens}, BeerAdvocate \cite{beer_dataset}, and GoodReads Comics \cite{mengting_goodreads} datasets. A brief set of data statistics is also presented in \cref{data_stats}. We simulate all three CF-scenarios (\cref{feedback_types}) 
via different pre-processing strategies. For explicit and implicit feedback, we follow a randomized 80/10/10 train-test-validation split for each user's consumption history in the dataset, and make use of the leave-one-last \cite{train_test_splitting} strategy for sequential feedback \ie
keep the last two interactions in each user's time-sorted consumption history in the validation and test-set respectively.
Since we can't control the initial construction of datasets, and to minimize the initial data bias, we follow the least restrictive data pre-processing \cite{making_progress, sigir20}. We only weed out the users 
with less
than 3 interactions,
to keep at least one occurrence 
in the train, validation, and test sets.

\begin{table}[!ht]
    \begin{footnotesize} 
    \begin{center}
        \begin{tabular}{c | c c c c}
            \toprule
            \multirow{2}{*}{\textbf{Dataset}} & \textbf{\#} & \textbf{\#} & \textbf{\#} & \textbf{Avg. User} \\ 
            & \textbf{Interactions} & \textbf{Users} & \textbf{Items} & \textbf{history length} \\
            \midrule
            
            Amazon Magazine      & 12.7k & 3.1k  & 1.3k  & 4.1 \\
            ML-100k              & 100k  & 943   & 1.7k  & 106.04 \\
            Amazon Luxury        & 126k  & 29.7k & 8.4k  & 4.26 \\
            Amazon Video-games   & 973k  & 181k  & 55.3k & 5.37 \\
            BeerAdvocate         & 1.51M & 18.6k & 64.3k & 81.45 \\
            Goodreads Comics     & 4.37M & 133k  & 89k   & 32.72 \\
            
            \bottomrule
        \end{tabular}
    \end{center}
    \end{footnotesize}
    \vspace{2mm}
    \caption{Data statistics of the \emph{six} datasets used in this paper.}
    \label{data_stats}
    \vspace{-6mm} 
\end{table}

\paragraph{Training details.} We implement all algorithms in PyTorch\footnote{Code is available at \href{https://github.com/noveens/sampling_cf}{\color{blue}{https://github.com/noveens/sampling\_cf}}} 
and train on a single GPU. For a fair comparison across algorithms, we perform 
hyper-parameter search on the validation set. 
For the three smallest datasets used in this paper (\cref{data_stats}), we search the latent size in $\{ 4, 8, 16, 32, 50 \}$, dropout in $\{ 0.0, 0.3, 0.5 \}$, and the learning rate in $\{ 0.001, 0.006, 0.02 \}$. Whereas for the three largest datasets, we fix the learning rate to be $0.006$. 
Note that despite the limited number of datasets and recommendation algorithms used in this study, given that we need to train all algorithms with hyper-parameter tuning for all CF scenarios, $\%$ data sampled according to all different sampling strategies discussed in \cref{common_sampling_schemes}, there are a total of 
$\sim400k$ 
unique models trained, 
equating to
a cumulative train time of over $400k ~ \times \sim1 \text{min} \approx 9$ months.

\paragraph{Data sampling.} To compute the $\Psi$-values as defined in \cref{sampling_perf}, we construct $\{ 80, 60, 40, 20, 10, 1 \} \%$ samples for each dataset and sampling strategy. To keep comparisons as fair as possible, for all sampling schemes, we only sample on the train set and never touch the validation and test sets. 

\subsubsection{How do different sampling strategies compare to each other? \ \ } Results with $\Psi$-values for all sampling schemes on all datasets are in \cref{psi_results}. Even though there are only six datasets under consideration, there are a few prominent patterns. First, the average $\Psi$ for most sampling schemes is around $0.4$, which implies a statistically significant correlation between the ranking of algorithms on the full \vs sub-sampled datasets. Next, \sampler generally outperforms all commonly used sampling strategies by some margin in retaining the ranking of different recommendation algorithms. Finally, the methods that discard the tail of a dataset (head-user and centrality-based) are the worst performing strategies overall, which supports the recent warnings against dense sampling of data \cite{sigir20}.

\newcommand{\STAB}[1]{\begin{tabular}{@{}c@{}}#1\end{tabular}}
\begin{table*}
    \begin{footnotesize}
    \begin{center}
        \begin{tabular}{c c | c c c c c c | c}
            \toprule
            \multicolumn{2}{c|}{\multirow{4}{*}{\textbf{Sampling strategy}}} & \multicolumn{6}{c|}{\emph{Datasets}} & \\
            & & \multicolumn{6}{c|}{} & \\
            & & \begin{tabular}{@{}c@{}}\textbf{Amazon}\\\textbf{Magazine}\end{tabular} & \textbf{ML-100k} & \begin{tabular}{@{}c@{}}\textbf{Amazon}\\\textbf{Luxury}\end{tabular} & \begin{tabular}{@{}c@{}}\textbf{Amazon}\\\textbf{Video-games}\end{tabular} & \textbf{BeerAdvocate} & \begin{tabular}{@{}c@{}}\textbf{Goodreads}\\\textbf{Comics}\end{tabular} & \textbf{\emph{Average}} \\
            \midrule
            
            \multirow{8}{*}{\STAB{\rotatebox[origin=c]{90}{\begin{tabular}{@{}c@{}}Interaction sampling\\\end{tabular}}}} & Random & 0.428     &  0.551     &  0.409     &  0.047     &  0.455     &  0.552     &  0.407 \\[0.6mm]
            & Stratified & 0.27      &  0.499     &  0.291     &  -0.01     &  0.468     &  0.538     &  0.343 \\[0.6mm]
            & Temporal & 0.289     &  0.569     &  0.416     &  -0.02     &  \underline{0.539}     &  0.634     &  0.405 \\[0.6mm]
            & \sampler \emph{w/} MF & 0.418     &  0.674     &  0.398     &  0.326     &  0.425     &  \underline{0.662}     &  \underline{0.484} \\[0.6mm]
            & \sampler \emph{w/} Bias-only & 0.38      &  0.684     &  \underline{0.431}     &  \underline{0.348}     &  0.365     &  0.6       &  0.468 \\[0.6mm]
            & \samplerprop \emph{w/} MF & 0.381     &  0.617     &  0.313     &  0.305     &  0.356     &  0.608     &  0.43 \\[0.6mm]
            & \samplerprop \emph{w/} Bias-only & 0.408     &  0.617     &  0.351     &  0.316     &  0.437     &  0.617     &  0.458 \\[0.6mm]
            \midrule
            \multirow{7}{*}{\STAB{\rotatebox[origin=c]{90}{\begin{tabular}{@{}c@{}}User sampling\\\end{tabular}}}} & Random & 0.436     &  0.622     &  0.429     &  0.17      &  0.344     &  0.582     &  0.431 \\[0.6mm]
            & Head & 0.369     &  0.403     &  0.315     &  0.11      &  -0.04     &  -0.02     &  0.19 \\[0.6mm]
            & \sampler \emph{w/} MF & 0.468     &  0.578     &  0.308     &  0.13      &  0.136     &  0.444     &  0.344 \\[0.6mm]
            & \sampler \emph{w/} Bias-only & 0.49      &  0.608     &  0.276     &  0.124     &  0.196     &  0.362     &  0.343 \\[0.6mm]
            & \samplerprop \emph{w/} MF & 0.438 &  0.683 &  0.307 &  0.098 &  0.458 &  0.592 &  0.429 \\[0.6mm]
            & \samplerprop \emph{w/} Bias-only & 0.434     &  \underline{0.751}     &  0.233     &  0.107     &  0.506     &  0.637     &  0.445 \\[0.6mm]
            \midrule
            \multirow{4}{*}{\STAB{\rotatebox[origin=c]{90}{\begin{tabular}{@{}c@{}}Graph\\\end{tabular}}}} & Centrality & 0.307     &  0.464     &  0.407     &  0.063     &  0.011     &  0.343     &  0.266 \\[0.6mm]
            & Random-walk & \underline{0.596}  &  0.5       &  0.395     &  0.306     &  0.137     &  0.442     &  0.396 \\[0.6mm]
            & Forest-fire & 0.564  &  0.493   &  0.415   &  0.265   &  0.099  &  0.454  &  0.382 \\[0.6mm]
            \bottomrule
        \end{tabular}
    \end{center}
    \end{footnotesize}
    \bigskip
    \caption{$\Psi$-values for all datasets and sampling strategies. Higher $\Psi$ is better. The best $\Psi$ for every dataset is \underline{underlined}. The $\Psi$-values for each sampling scheme \emph{averaged over all datasets} is appended to the right.}
    \label{psi_results}
    \vspace{-6mm} 
\end{table*}

\subsubsection{How does the relative performance of algorithms change as a function of sampling rate? \ \ } In an attempt to better understand the impact of sampling on different recommendation algorithms used in this study (\cref{algorithms}), we visualize the probability of a recommendation algorithm moving up in the overall method ranking with data sampling. We estimate the aforementioned probability using Maximum-Likelihood-Estimation (MLE) on the experiments already run in computing $\Psi(\mathcal{D}, s)$. Formally, given a recommendation algorithm $r$, CF-scenario $f$, and data sampling percent $p$:
\begin{equation*}
    P_{\mathit{MLE}}(r ~|~ f, p) = \lambda \cdot \sum_{\mathcal{D}} \sum_{s} \sum_{m} 0.5 + \frac{\mathcal{R}_{f, m}(r) - \mathcal{R}_{f, m}^{s, p}(r)}{2 \cdot (n-1)}
\end{equation*}
where $\lambda$ is an appropriate normalizing constant, and $n$ represents the total number of algorithms. A heatmap visualizing $P_{\mathit{MLE}}$ for all algorithms and CF-scenarios is shown in \cref{percent_sampling_vs_method}. We see that simpler methods like Bias-only and PopRec have the highest probability across data scenarios of increasing their ranking order with extreme sampling. Whereas parameter-heavy algorithms like SASRec, SVAE, MVAE, \etc tend to decrease in the ranking order, which is indicative of overfitting on smaller data samples.
\begin{figure}[ht!]     
    \centering
    \vspace{-0.1cm}
    \includegraphics[width=0.9\linewidth]{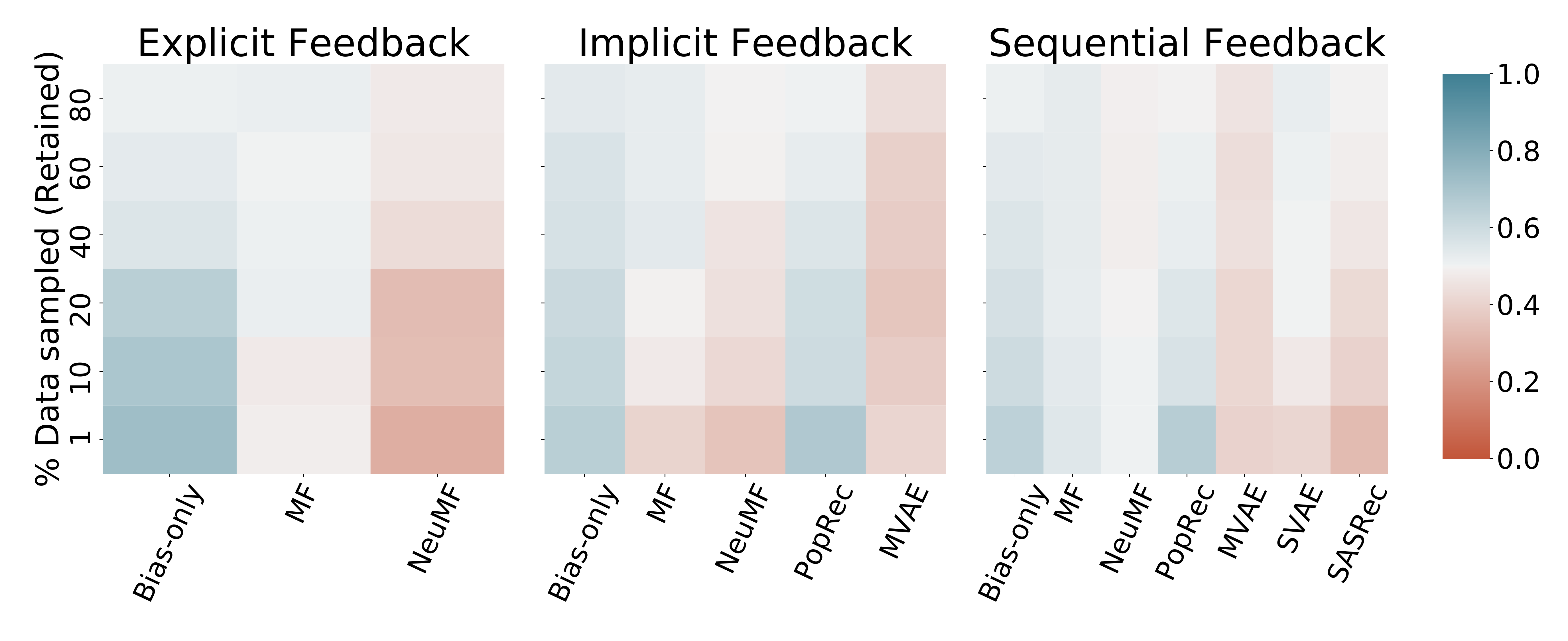}
    \vspace{-0.4cm}
    \caption{Heatmap of the probability of an algorithm moving in the overall ranking with extreme sampling. A high value indicates that the algorithm is most probable to \emph{move up} in the sampled data ranking order, whereas a low value indicates that the algorithm is most probable to \emph{move down}.}
    \label{percent_sampling_vs_method}
    \vspace{-0.3cm}
\end{figure}

\subsubsection{How much data to sample? \ \ } Since $\Psi$ is averaged over all $p \in \{ 80, 60, 40, 20, 10, 1 \}$\% data samples, to better estimate a reasonable amount of data to sample, we stratify $\Psi$ according to each value of $p$ and note the average Kendall's Tau. As we observe from \cref{percent_sampling_vs_tau}, 
there is a steady increase in the performance measure when more data is retained. Next, despite the results in \cref{percent_sampling_vs_tau} being averaged over \emph{sixteen} sampling strategies, we still notice a significant amount of performance retained after sampling just $50-60\%$ of the data. 
\begin{figure}[ht!] 
    \centering
    \vspace{-0.2cm}
    \includegraphics[width=0.6\linewidth]{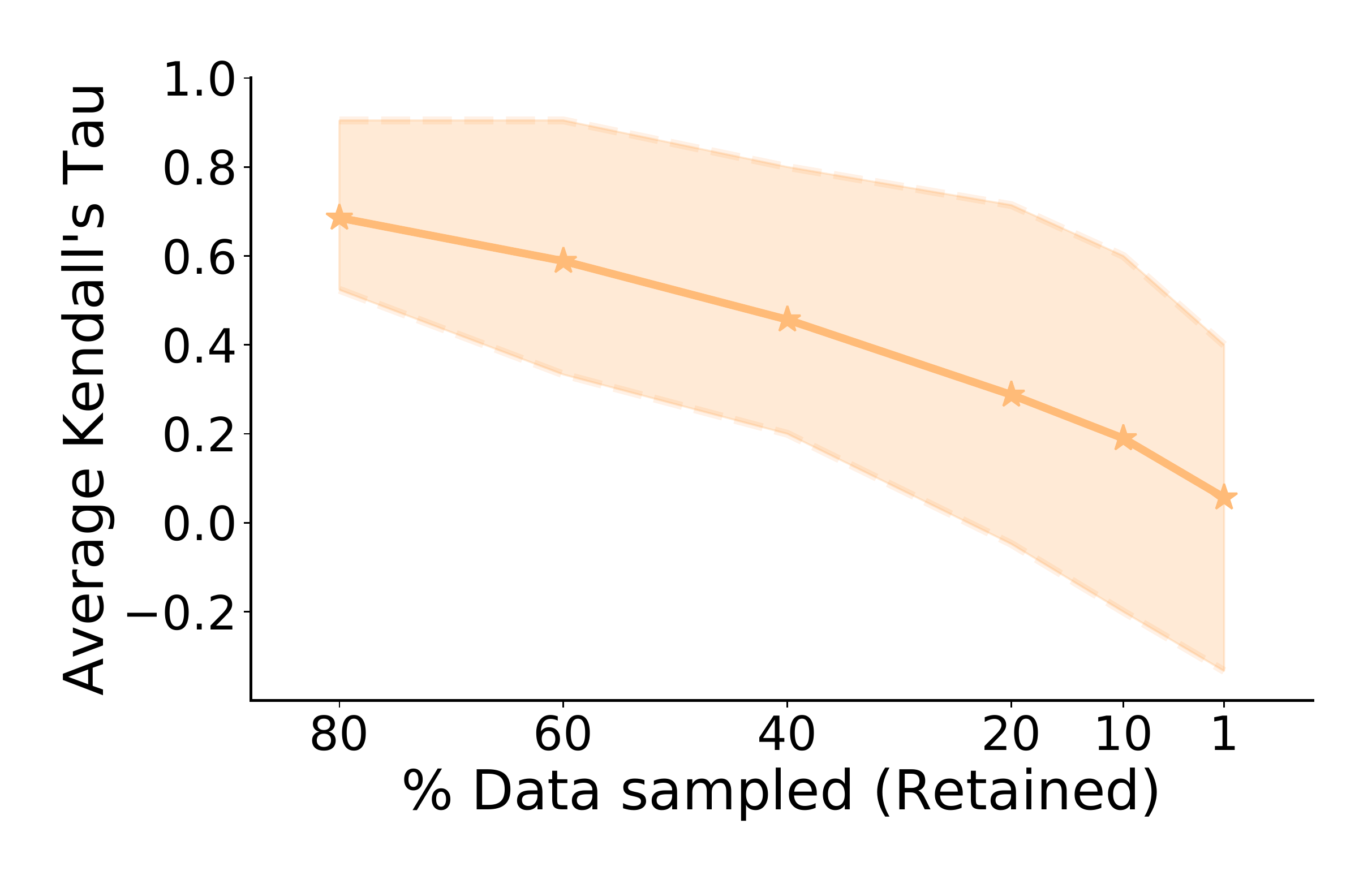}
    \vspace{-0.25cm}
    \caption{Comparison of the average Kendall's Tau with \% data sampled. A higher Tau indicates better retaining power of the ranking of different recommendation algorithms.}
    \label{percent_sampling_vs_tau}
    \vspace{-0.3cm}
\end{figure} 

\subsubsection{Are different metrics affected equally by sampling? \ \ } In an attempt to better understand how the different implicit and sequential feedback metrics (\cref{feedback_types}) are affected by sampling, we visualize the average Kendall's Tau for all sampling strategies (except \sampler for brevity) and all \% data sampling choices separately over the AUC, Recall, and nDCG metrics in \cref{metric_correlation}. As expected, we observe a steady decrease in the model quality across the accuracy metrics over the different sampling schemes. This is in agreement with the analysis from \cref{percent_sampling_vs_tau}. Next, most sampling schemes follow a \emph{similar} downwards trend in performance for the three metrics with AUC being slightly less affected and nDCG being slightly more affected by extreme sampling.
\begin{figure}[ht!] 
    \vspace{-0.1cm}
    \includegraphics[width=\linewidth]{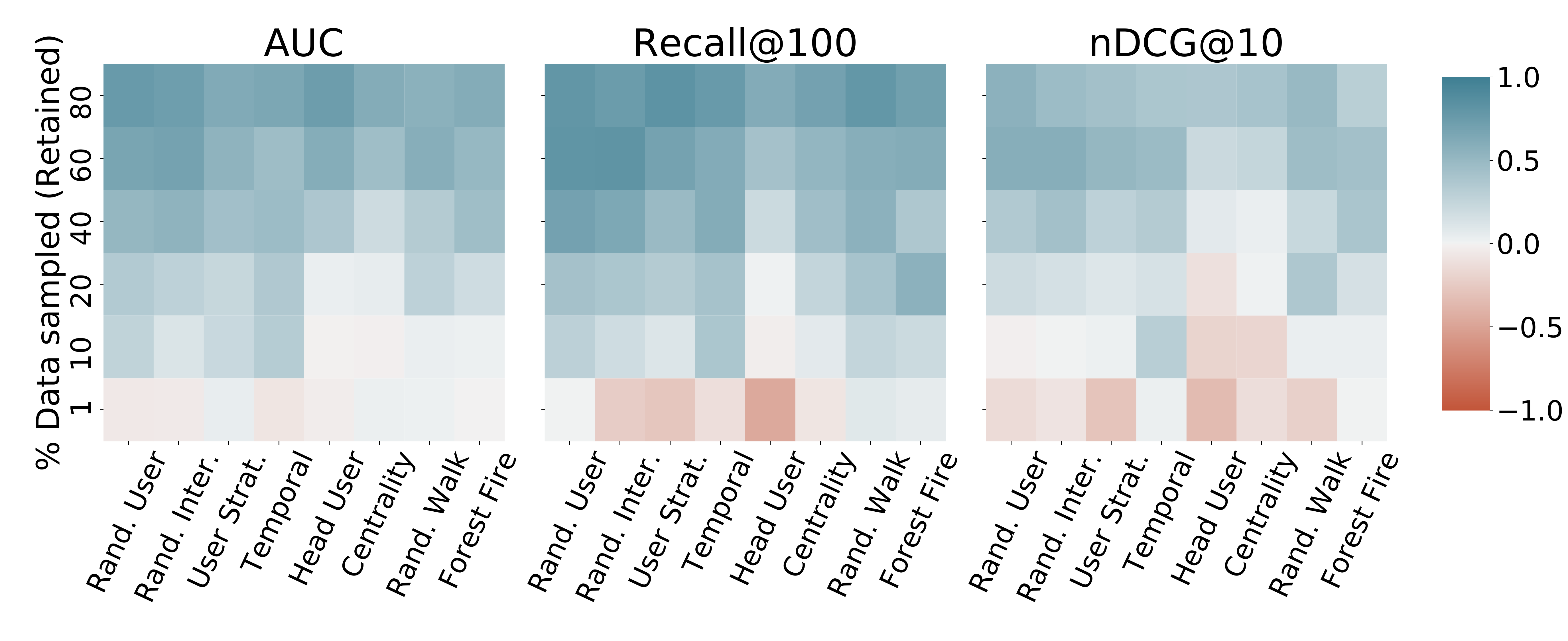}
    \vspace{-0.8cm}
    \caption{Heatmap of the average Kendall's Tau for different samplers stratified over metrics and \% data sampled.}
    \label{metric_correlation}
    \vspace{-0.5cm}
\end{figure}
\section{\oracle: Which sampler is best for me?} Although the results presented in \cref{main_exp} are indicative of correlation between the ranking of recommendation algorithms on the full dataset \vs smaller sub-samples, there still is no `one-size-fits-all' solution to the question of \emph{how to best sub-sample a dataset for retaining the performance of different recommendation algorithms?} In this section, we propose \oracle, that attempts to answer this question from a statistical perspective, in contrast with existing literature that generally has to resort to sensible heuristics \cite{large_graphs, scaling_up, sampling_cf_nn}.

\subsection{Problem formulation}
Given a dataset \dataset, we aim to gauge how a \emph{new} sampling strategy will perform in retaining the performance of different recommendation algorithms. Having already experimented with sixteen different sampling strategies on six datasets (\cref{main_exp}), we take a frequentist approach in predicting the performance of any sampling scheme. To be precise, to predict the performance of sampling scheme $s$ on dataset \dataset, we start by creating \dataset's subset according to $s$ and call it $\mathcal{D}^s$. We then represent \dataset and $\mathcal{D}^s$ in a low-dimensional latent space, followed by a powerful regression model to directly estimate the performance of $s$ on \dataset.

\subsection{Dataset representation} \label{data_rep}
We experiment with the following techniques of embedding a user-item interaction dataset into lower dimensions:

\subsubsection{Handcrafted. \ \ } For this method, we cherry-pick a few representative characteristics of \dataset and the underlying user-item bipartite interaction graph $\mathcal{G}$. Inspired by prior work \cite{large_graphs}, we represent \dataset as a combination of five features. We first utilize the frequency distribution of all users and items in \dataset. Next, we evaluate the distribution of the top$-100$ eigenvalues of $\mathcal{G}$'s adjacency matrix. All of these three distributions are generally long-tailed and heavily skewed. Furthermore, to capture notions like the diameter of $\mathcal{G}$, we compare the distribution of the number of hops $h$ \vs the number of pairs of nodes in $\mathcal{G}$ reachable at a distance less than $h$ \cite{hop_plot}. This distribution, unlike others is monotonically increasing in $h$. Finally, we also compute the size distribution of all connected components in $\mathcal{G}$, where a connected component is defined to be the maximal set of nodes, such that a path exists between any pair of nodes. Ultimately, we ascertain \dataset's final representation by concatenating $10$ evenly-spaced samples from each of the aforementioned distributions along with the total number of users, items, and interactions in \dataset. This results in a $53-$dimensional embedding for each dataset. Note that unlike previous work of simply \emph{retaining} the discussed features as a proxy of the quality of data subsets \cite{large_graphs}, \oracle instead uses these features to learn a regression model \emph{on-top} which can dynamically establish the importance of each feature in the performance of a sampling strategy.

\subsubsection{Unsupervised GCN. \ \ } With the recent advancements in the field of Graph Convolutional Networks \cite{original_gcn} to represent graph-structured data for a variety of downstream tasks, we also experiment with a GCN approach to embed $\mathcal{G}$. We modify the InfoGraph framework \cite{infograph}, which uses graph convolution encoders to obtain patch-level representations, followed by sort-pooling \cite{sort_pooling} to obtain a fixed, low-dimensional embedding for the entire graph. Since the nodes in $\mathcal{G}$ are the union of all users and items in \dataset, we randomly initialize $32-$dimensional embeddings using a Xavier-uniform prior \cite{xavier}. Parameter optimization is performed in an unsupervised fashion by maximizing the mutual information \cite{mutual_information} amongst the graph-level and patch-level representations of nodes in the same graph. We validate the best values of the latent dimension and number of layers of the GCN from $\{ 4, 8, 16, 32 \}$ and $\{ 1, 3 \}$ respectively.

\subsection{Training \& Inference} \label{oracle_architecture}
Having discussed different representation functions $\mathcal{E} : \mathcal{D} \mapsto \mathbb{R}^d$ to embed a CF-dataset in \cref{data_rep}, we now discuss \oracle's training framework agnostic to the actual details about $\mathcal{E}$. 

\paragraph{Optimization problem.} As a proxy of the performance of a sampler on a given dataset, we re-use the Kendall's Tau for each CF-scenario, metric, and sampling percent used while computing the $\Psi(\mathcal{D}, s)$ in \cref{main_exp}. To be specific, given $\mathcal{D}_{f}^{s, p}$ which is a $p\%$ sample of $f-$type feedback data $\mathcal{D}_f$, sampled according to sampling strategy $s$, we aim to estimate $\tau(\mathcal{R}_{f, m}, \mathcal{R}_{f, m}^{s, p})$ without ever computing the actual ranking of algorithms 
on either the full or sampled datasets:
\begin{equation} \label{tau_hat}
    \hat{\tau}\left(\mathcal{R}_{f, m}, \mathcal{R}_{f, m}^{s, p}\right) = \Phi\left(\mathcal{E}(\mathcal{D}_f), \mathcal{E}(\mathcal{D}_f^{s, p}), m\right),
\end{equation}
where $\Phi$ is an arbitrary neural network, and $m$ is the metric of interest (see \cref{model_scenario_table}). We train $\Phi$ by either (1) regressing on the Kendall's Tau computed for each CF scenario, metric, and sampling percent used while computing the $\Psi(\mathcal{D}, s)$ scores in \cref{main_exp}; or (2) performing BPR-style \cite{bpr} pairwise ranking on two sampling schemes $s_i \succ s_j \iff \tau(\mathcal{R}_{f, m}, \mathcal{R}_{f, m}^{s_i, p}) > \tau(\mathcal{R}_{f, m}, \mathcal{R}_{f, m}^{s_j, p})$. Formally, the two optimization problems are defined as follows:
\begin{align*}
    \argmin{\Phi} & \sum_{\mathcal{D}_f} \sum_{s} \sum_{p} \sum_{m} \left(\tau\left(\mathcal{R}_{f, m}, \mathcal{R}_{f, m}^{s, p}\right) - \hat{\tau}\left(\mathcal{R}_{f, m}, \mathcal{R}_{f, m}^{s, p}\right) \right)^2 \\ & \text{(\oracle-regression)} \\
    \argmin{\Phi} & \sum_{\mathcal{D}_f} \sum_{s_i \succ s_j} \sum_{p} \sum_{m} -\text{ln}~\sigma\left(\hat{\tau}\left(\mathcal{R}_{f, m}, \mathcal{R}_{f, m}^{s_i, p}\right) - \hat{\tau}\left(\mathcal{R}_{f, m}, \mathcal{R}_{f, m}^{s_j, p}\right) \right) \\ & \text{(\oracle-ranking)} \\
    \text{where, \;} & \hat{\tau}\left(\mathcal{R}_{f, m}, \mathcal{R}_{f, m}^{s, p}\right) ~=~ \Phi\left(\mathcal{E}(\mathcal{D}_f), \mathcal{E}(\mathcal{D}_f^{s, p}), m\right).
\end{align*}
The critical differences between the two aforementioned optimization problems are the downstream use-case and $\Phi$'s training time. If the utility of \oracle is to rank different sampling schemes for a given dataset, then \oracle-ranking is better suited as it is robust to the noise in computing $\tau(\mathcal{R}_{f, m}, \mathcal{R}_{f, m}^{s_i, p})$ like improper hyper-parameter tuning, local minima, \etc On the other hand, \oracle-regression is better suited for the use-case of estimating the exact values of $\tau$ for a sampling scheme on a given dataset. Even though both optimization problems converge in less than $2$ minutes given the data collected in \cref{main_exp}, the complexity of optimizing \oracle-ranking is still squared \wrt the total number of sampling schemes, whilst that of \oracle-regression is linear.

\paragraph{Architecture.} To compute $\Phi(\mathcal{E}(\mathcal{D}_f), \mathcal{E}(\mathcal{D}_f^{s, p}), m)$ we concatenate $\mathcal{E}(\mathcal{D}_f)$, $\mathcal{E}(\mathcal{D}_f^{s, p})$, one-hot embedding of $m$; and pass it through two relu-activated MLP projections to obtain $\hat{\tau}(\mathcal{R}_{f, m}, \mathcal{R}_{f, m}^{s, p})$. For \oracle-regression, we also pass the final output through a tanh activation, to reflect the range of Kendall's Tau \ie $[-1, 1]$.

\paragraph{Inference.} Since computing both $\mathcal{E}$ and $\Phi$ are agnostic to the datasets and the sampling schemes, we can simply use the trained $\mathcal{E}$ and $\Phi$ functions to rank \emph{any} sampling scheme for \emph{any} CF dataset. Computationally, given a trained \oracle, the utility of a sampling scheme can be computed simply by computing $\mathcal{E}$ twice, along with a single pass over $\Phi$, completing in the order of milliseconds.

\begin{figure*} \centering
\begin{minipage}{0.47\textwidth}
    \centering
    \captionsetup{type=table} 
    \begin{footnotesize} 
    \begin{center}
        \begin{tabular}{c c | c c c}
            \toprule
            $\mathcal{E}$ & $\Phi$ & \textbf{MSE} & \textbf{P@1} \\ \midrule
            
            \multicolumn{2}{c|}{Random} & 0.2336 & 25.2 \\ 
            \multicolumn{2}{c|}{User sampling \emph{w/} Bias-only \samplerprop} & -- & 30.6 \\ \midrule
            
            Handcrafted & Least squares regression & 0.1866 & 31.7 \\
            '' & XGBoost regression & 0.1163 & 43.9 \\
            '' & \oracle-regression & \underline{0.1008} & 51.2 \\ 
            '' & \oracle-ranking    & -- & 51.2 \\ \midrule
            
            Unsupervised GCN & Least squares regression & 0.1838 & 39.1 \\
            '' & XGBoost regression & 0.1231 & 43.9 \\
            '' & \oracle-regression & 0.1293 & 48.8 \\ 
            '' & \oracle-ranking    & -- & \underline{53.7} \\ \bottomrule
        \end{tabular}
    \end{center}
    \end{footnotesize}
    \vspace{0.5cm}
    \caption{Results for predicting the best sampling scheme for a particular dataset over a germane metric. The MSE-value next to randomly choosing the sampling scheme represents the variance of the test-set. Best values are \underline{underlined}.}
    \label{oracle_results}
    \vspace{-6mm} 
\end{minipage} \hspace{0.8cm} 
\begin{minipage}{0.46\textwidth}
    \centering
    \vspace{-0.5cm}
    \includegraphics[width=0.86\linewidth]{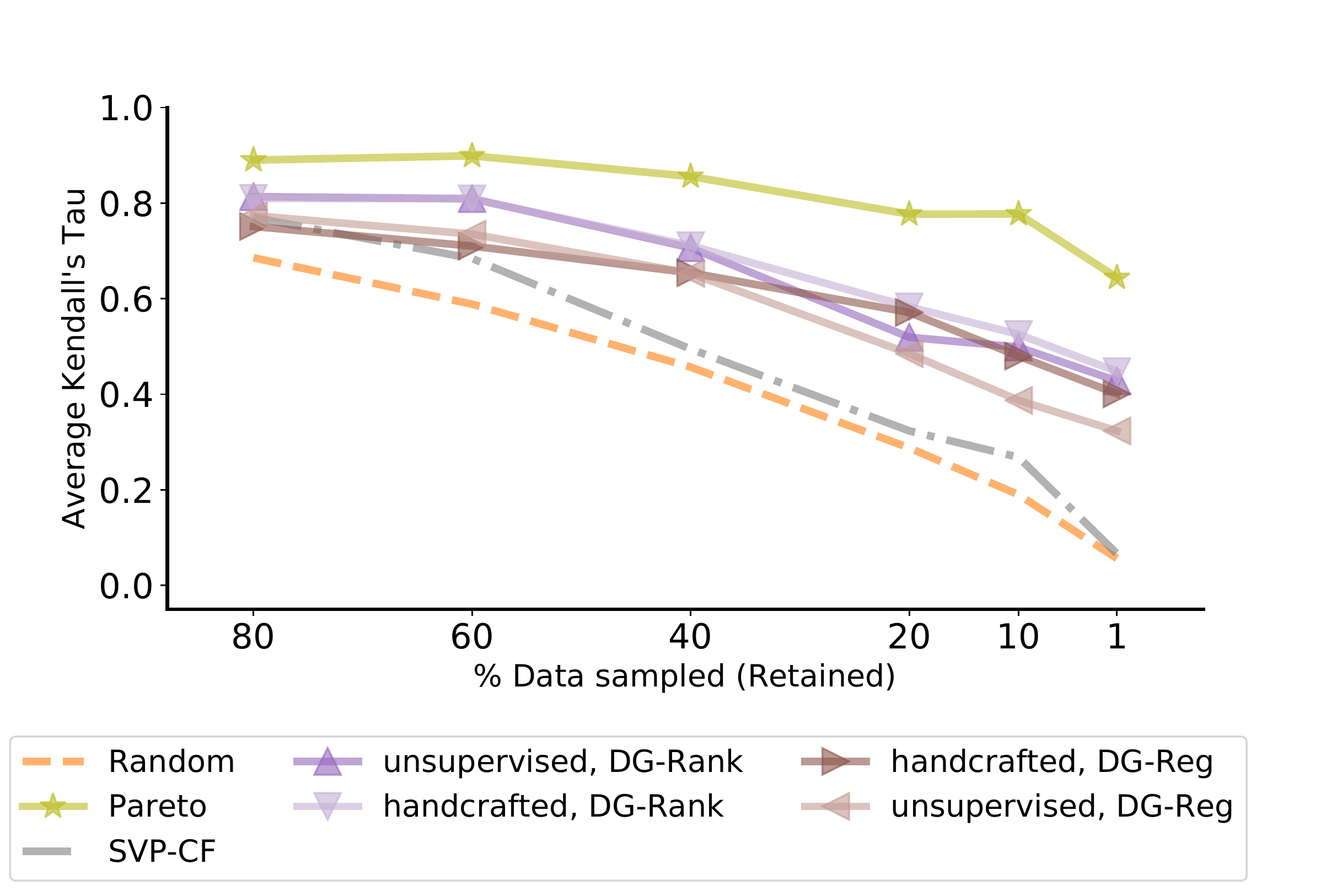}
    \vspace{-0.1cm}
    \caption{Comparison of the average Kendall's Tau with \% data sampled for different sampling-selection strategies. A higher Tau indicates better retaining power of the ranking of different recommendation algorithms.}
    \label{percent_sampling_vs_tau_oracle}
\end{minipage}
\end{figure*} 

\subsection{Experiments}
\paragraph{Setup.} 
We first create a train/validation/test split by randomly splitting all possible metrics and sampling $\%$ pairs $(m, p)$ into $70/15/15\%$ proportions. Subsequently for each dataset \dataset, CF-scenario $f$, and $(m, p)$ in the validation/test-set, we ask \oracle to rank all $16$ samplers (\cref{psi_results}) for $p\%$ sampling of $f-$type feedback for \dataset and use metric $m$ for evaluation by sorting $\hat{\tau}$ for each sampler, as defined in \cref{tau_hat}. To evaluate \oracle, we use the P$@1$ metric between the actual sampler ranking computed while computing $\Psi-$scores in \cref{main_exp}, and the one estimated by \oracle.

\subsubsection{How accurately can \oracle predict the best sampling scheme? \ \ } In \cref{oracle_results}, we compare all dataset representation choices $\mathcal{E}$, and multiple $\Phi$ architectures for the task of predicting the best sampling strategy. In addition to the regression and ranking architectures discussed in \cref{oracle_architecture}, we also compare with linear least-squares regression and XGBoost regression \cite{xgboost} as other choices of $\Phi$. In addition, we compare \oracle with simple baselines: (1) randomly choosing a sampling strategy; and (2) the best possible static sampler choosing strategy---always predict user sampling \emph{w/} Bias-only \samplerprop. First and foremost, irrespective of the $\mathcal{E}$ and $\Phi$ choices, \oracle outperforms both baselines. Next, both the handcrafted features and the unsupervised GCN features perform quite well in predicting the best sampling strategy, indicating that the graph characteristics are well correlated with the final performance of a sampling strategy. Finally, \oracle-regression and \oracle-ranking both perform better than alternative $\Phi-$choices, especially for the P$@1$ metric. 


\subsubsection{Can we use \oracle to sample more data without compromising performance? \ \ } In \cref{percent_sampling_vs_tau_oracle}, we compare the impact of \oracle in sampling more data by dynamically choosing an appropriate sampler for a given dataset, metric, and $\%$ data to sample. 
More specifically,
we compare the percentage of data sampled with the Kendall's Tau averaged over all datasets, CF-scenarios, and relevant metrics for different sampling strategy selection approaches. We compare \oracle with: (1) randomly picking a sampling strategy averaged over $100$ runs; and (2) the Pareto frontier as a skyline which always selects the best sampling strategy for any CF-dataset. As we observe from \cref{percent_sampling_vs_tau_oracle}, \oracle is better than predicting a sampling scheme at random, and 
is
much closer to the Pareto frontier. Next, pairwise ranking approaches are marginally better than regression approaches 
irrespective of $\mathcal{E}$. Finally, \oracle can appraise the best-performing recommendation algorithm with a suitable amount of confidence using only $10\%$ of the original data. This is significantly more efficient compared to having to sample $50-60\%$ if we were to always sample using a fixed strategy.
\section{Discussion} In this work, we discussed two approaches for representative sampling of CF-data, especially for accurately retaining the \emph{relative} performance of different recommendation algorithms. First, we proposed \sampler, which is better than commonly used sampling strategies, followed by introducing \oracle which \emph{analyzes} the performance of different samplers on different datasets. Detailed experiments suggest that \oracle can confidently estimate the downstream utility of any sampler within a few milliseconds, thereby enabling practitioners to benchmark different algorithms on $10$\% data sub-samples, with an average $5.8\times$ time speedup.

To realize the real-world environmental impact of \oracle, we discuss a typical weekly RecSys development cycle 
and its carbon footprint. 
Taking the Criteo Ad dataset as inspiration, we assume a common industry-scale dataset to have $\sim4$B interactions.
We assume a hypothetical use case that benchmarks for \eg $25$ different algorithms, each with $40$ different hyper-parameter variations. To estimate the energy consumption of GPUs, we scale the $0.4$ minute MLPerf \cite{mlperf} run of training NeuMF \cite{neural_mf} on the Movielens-20M dataset over an Nvidia DGX-2 machine. The total estimated run-time for all experiments would be $25 \times 40 \times \frac{4B}{20M} \times \frac{0.4}{60} \approx 1340$ hours; and following \cite{co2e}, the net CO$_2$ emissions would roughly be $10 \times 1340 \times 1.58 \times 0.954 \approx 20k$ lbs. To better understand the significance of this number, a brief CO$_2$ emissions comparison is presented in \cref{co2e}. Clearly, \oracle along with saving a large amount of experimentation time and cloud compute cost, can also significantly reduce the carbon footprint of this \emph{weekly process} by more than an average human's \emph{yearly} CO$_2$ emissions.

\begin{table}[!ht]
    \vspace{-5mm} 
    \begin{footnotesize} 
    \begin{center}
        \begin{tabular}{c c}
            \toprule
            \textbf{Consumption} & \textbf{CO$_2$e (lbs.)} \\ \midrule
            
            1 person, NY$\leftrightarrow$SF flight      & 2k \\
            Human life, 1 year avg.                     & 11k \\ 
            \midrule
            Weekly RecSys development cycle             & 20k \\
            '' \ \ \ \ \emph{w/} \oracle                & 3.4k \\
            
            \bottomrule
        \end{tabular}
    \end{center}
    \end{footnotesize}
    \vspace{2mm}
    \caption{CO$_2$ emissions comparison \cite{co2e}}
    \label{co2e}
    \vspace{-10mm} 
\end{table}

Despite having significantly benefited the run-time and environmental impacts of benchmarking 
algorithms on massive datasets, our analysis heavily relied on the experiments of training seven 
recommendation algorithms on six datasets and their various samples. Despite the already large experimental cost, we strongly believe that the downstream performance of \oracle could be further improved by simply 
considering
more algorithms and diverse datasets.
In addition to better sampling, analyzing the fairness 
aspects of training algorithms on sub-sampled datasets is an interesting research direction, which we plan to explore in future work.

\begin{acks}
This work was partly supported by NSF Award \#1750063.
\end{acks}

\bibliographystyle{ACM-Reference-Format}
\bibliography{acmart}




\end{document}